\documentclass{article}

\pdfoutput=1

\makeatletter
\renewcommand*{\@fnsymbol}[1]{\ifcase#1\or*\else\@arabic{\numexpr#1-1\relax}\fi}
\makeatother

\usepackage[preprint]{neurips_2019}

\usepackage[utf8]{inputenc} 
\usepackage[T1]{fontenc}    
\usepackage{url}            
\usepackage{booktabs}       
\usepackage{amsfonts}       
\usepackage{nicefrac}       
\usepackage{microtype}      
\usepackage{color}
\usepackage{xcolor}

\newcommand{\ifcomments}{\iftrue}
\ifx\condtion\undefined
\fi

\hyphenation{Image-Net}

\usepackage{epsfig, endnotes, amsmath, amsfonts, amssymb}
\usepackage{subfigure}
\usepackage{amsmath, amsfonts, amssymb, amsthm}
\usepackage{txfonts} 
\usepackage{verbatim}
\usepackage{url}
\usepackage{color}
\usepackage{algorithm2e}
\usepackage[noend]{algpseudocode}
\usepackage{multirow}

\usepackage{siunitx}
\sisetup{group-separator = {,}}
\sisetup{group-minimum-digits = 3}

\ifx\BlackBox\undefined
\newcommand{\BlackBox}{\rule{1.5ex}{1.5ex}}  
\fi
\ifx\proof\undefined
\newenvironment{proof}{\par\noindent{\bf Proof\ }}{\hfill\BlackBox\\[2mm]}
\fi

\newcommand\shortsection[1]{\vspace{6pt}{\noindent\bf #1.}}

\newtheorem{theorem}{Theorem}[section]

\theoremstyle{definition}
\newtheorem{definition}{Definition}[section]

\makeatletter
\def\url@leostyle{%
  \@ifundefined{selectfont}{\def\UrlFont{\sf}}{\def\UrlFont{\small\sffamily}}}
\makeatother
\urlstyle{leo}

\makeatletter
\def\url@beostyle{%
  \@ifundefined{selectfont}{\def\UrlFont{\sf}}{\def\UrlFont{\scriptsize\sffamily}}}
\makeatother

\hyphenation{Image-Net}
\hyphenation{Auto-ZOOM}
\hyphenation{CIFAR}
\usepackage[toc,page]{appendix}

\usepackage{booktabs}
\usepackage{siunitx}

\begin{document}
\thispagestyle{empty}
\title{Certifying Joint Adversarial Robustness for~Model~Ensembles}
\author{Mainuddin Ahmad Jonas \\ 
University of Virginia\\
{\sf\small maj2bh@virginia.edu} 
       \And David Evans\\
       University of Virginia\\
   {\sf\small evans@virginia.edu}}
\date{}

\maketitle

\begin{abstract}

Deep Neural Networks (DNNs) are often vulnerable to adversarial examples. Several proposed defenses deploy an ensemble of models with the hope that, although the individual models may be vulnerable, an adversary will not be able to find an adversarial example that succeeds against the ensemble. Depending on how the ensemble is used, an attacker may need to find a single adversarial example that succeeds against all, or a majority, of the models in the ensemble.  The effectiveness of ensemble defenses against strong adversaries depends on the vulnerability spaces of models in the ensemble being disjoint. We consider the joint vulnerability of an ensemble of models, and propose a novel technique for certifying the joint robustness of ensembles, building upon prior works on single-model robustness certification. We evaluate the robustness of various models ensembles, including models trained using cost-sensitive robustness to be diverse, to improve understanding of the potential effectiveness of ensemble models as a defense against adversarial examples. 
\end{abstract}

\section{Introduction}

Deep Neural Networks (DNNs) have been found to be very successful at many tasks, including mage classification~\cite{alexnet,he2016deep}, but have also been found to be quite vulnerable to misclassifications from small adversarial perturbations to inputs~\cite{szegedy2014intriguing,goodfellow2015explaining}. Many defenses have been proposed to protect models from these attacks.  Most focus on making a single model robust, but there may be fundamental limits to the robustness that can be achieved by a single model~\cite{schmidt2018, gilmer2018adversarial,fawzi2018adversarial,mahloujifar2018curse,shafahi2018adversarial}. Several of the most promising defenses employ multiple models in various ways~\cite{feinman2017,tramer2017,meng2017magnet,xu2017feature,pang2019}. These ensemble-based defenses work on the general principle that it should be more difficult for an attacker to find adversarial examples that succeed against two or more models at the same time, compared to attacking a single model.  However, an
attack crafted against one model may be successful against a different model trained to perform the same task. This leads to a notion of \emph{joint vulnerability} to capture the risk of adversarial examples that compromise a set of models, illustrated in Figure~\ref{fig:joint}.  Joint vulnerability makes ensemble-based defenses less
effective. Thus, reducing joint vulnerability of models is important to ensure stronger ensemble-based defenses. 

\begin{figure}[tb]
\centering
\includegraphics[width=0.4\textwidth]{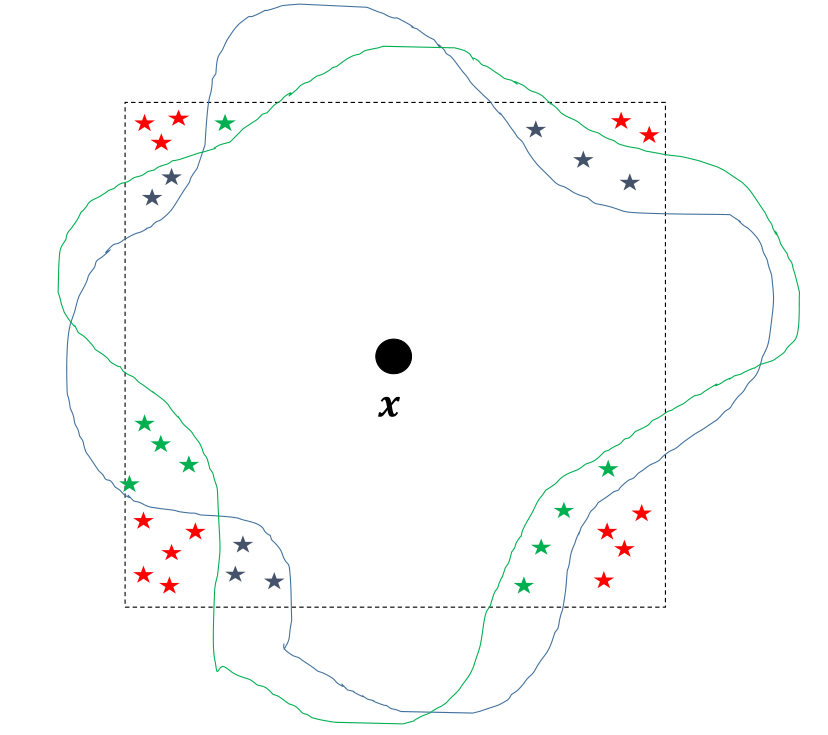}
\caption{Illustration of joint adversarial vulnerability of two binary classification models. The seed input is $x$, and the dotted square box around represents its true decision boundary. The blue and green lines describe the decision boundaries of two models. The models are jointly vulnerable in the regions marked with red stars, where both models consistently output the (same) incorrect class.}
\label{fig:joint}
\end{figure}

Although the above ensemble defenses have shown promise when evaluated against experimental attacks, these attacks often assume adversaries do not adapt to the ensemble defense, and no previous work has certified the joint robustness of an ensemble defense. On the other hand, several recent works have developed methods to certify robustness for single models~\cite{wong2018provable,tjeng2017,gowal2018}. In this work, we introduce methods for providing robustness guarantees for an ensemble of models buidling upon the approaches of Wong and Kolter~\cite{wong2018provable} and Tjeng et al.~\cite{tjeng2017}. 

\shortsection{Contributions}
Our main contribution is a framework to certify robustness for an ensemble of models against adversarial examples. We define three simple ensemble frameworks (Section~\ref{sec:ensembletypes})
and provide robustness guarantees for each of them, while evaluating the tradeoffs between them. We propose a novel technique to extend prior work on single model robustness to verify joint robustness of ensembles of two or more models (Section~\ref{sec:certifying}). Second, we demonstrate that the cost-sensitive training  approach~\cite{zhang2018} can be used to train diverse robust models that can be used to certify a high fraction of test examples (Section~\ref{sec:experiments}). Our results show that, for the MNIST dataset, we can train diverse ensembles of two, five and ten models using different cost-sensitive robust matrices. When these diverse models are combined using our ensemble frameworks, the ensembles can be used to certify a larger number of test seeds compared to using a single overall-robust model. For example, 78.1\% of test examples can be certified robust for two-model averaging ensemble and 85.6\% for a ten-model ensemble, compares with 72.7\% for a single model. We further show that use of ensemble models do not significantly reduce the model's accuracy on benign inputs, and when rejection is used as an option, can reduce the error rate to essentially zero with a 9.7\% rejection rate.

\section{Background and Related Work}
In this section, we briefly introduce adversarial examples, provide background on robust training and certification, and describe defenses using model ensembles.

\subsection{Adversarial Examples} 

Several definitions of adversarial example have been proposed. For this paper,
we use this definition~\cite{biggio2013defn, goodfellow2015explaining}: given a model $M$,
an input $x$, a distance metric $\Delta$, and a distance measure $\epsilon$, an
\emph{adversarial example} for the input $x$ is $x'$ where $M(x') \neq M(x)$ and
$\Delta(x, x') \leq \epsilon$.

In recent years, there has been a significant amount of research on adversarial
examples against DNN models, including attacks such as
FGSM~\cite{goodfellow2015explaining}, DeepFool~\cite{moosavi2016deepfool},
PGD~\cite{madry2017towards}, Carlini-Wagner~\cite{carlini2017}, and
JSMA~\cite{papernot2016limitations}. The FGSM attack works by taking the signs of the
gradient of the loss with respect to the input $x$ and adding a small
perturbation to the direction of loss for all input features. This simple strategy is surprisingly successful. The PGD attack is considered a very
strong state-of-the-art attack. It is essentially an iterative version of FGSM,
where instead of just taking one step many smaller steps are taken subject to some constraints and with some randomization. 

One interesting property of these attacks is that the adversarial examples they find are often transferable~\cite{evtimov2017robust} --- a successful attack against one model is often successful against a second model. Transfer attacks enable black-box attacks where the adversary does not have full access to the target model. More importantly for our purposes, they also demonstrate that an adversarial example found against one model is also effective against other models, so can be effective against ensemble-based defenses~\cite{xie2017,tramer2017}. In our work, we consider the threat model where the adversary has white-box access to all of the models in the ensemble and knowledge of the ensemble construction.

\subsection{Robust training} 

While many proposed adversarial examples defenses look promising, adaptive attacks that compromise defenses are nearly always found~\cite{tramer2020adaptive}.  The failures of ad hoc defenses motivate increased focus on robust training and provable defenses.  Madry et
al.~\cite{madry2017towards}, Wong et al.~\cite{wong2018provable}, and Raghunathan et
al.~\cite{raghunathan2018certified} have proposed robust training methods to defend
against adversarial examples. Madry et al.\ use the PGD attack to find
adversarial examples with high loss value around training points, and then
iteratively adversarially train their models on those seeds. Wong et al.\ define
an adversarial polytope for a given input, and robustly train the model to
guarantee adversarial robustness for the polytope by reducing the problem into a
linear programming problem. These works focus on single models; we propose a way to make \emph{ensemble} models jointly robust through training a set of models to be both robust and diverse.

\subsection{Certified robustness} 
\label{sec:cert}
Several recent works aim to provide guarantees of robustness for  models
against constrained adversarial
examples~\cite{wong2018provable,tjeng2017,raghunathan2018certified,cohen2019certified}. All of
these works provide certification for individual models. A model $M(\cdot)$ is
certifiably robust for an input $x$, if for all $x'$ where $\Delta(x, x') \leq
\epsilon$, $M(x')$ is robust.  We 
extend these techniques for ensemble models. In particular, we extend Tjeng
et al.'s~\cite{tjeng2017} MIP verification technique and Wong et
al.'s~\cite{wong2018provable} convex adversarial polytope method. Both techniques are based on using linear programming to calculate a bound on outputs given the allowable
input perturbations, and using those output bounds to provide robustness
guarantees. MIPVerify uses mixed integer linear programming solvers, which are computationally very expensive for deep neural networks. To get around this issue, Wong et
al.~\cite{wong2018provable} use a dual network formulation of the original network that 
over-approximates the adversarial region, and apply widely used techniques such as stochastic gradient descent to the solve the
optimization problem efficiently. This can scale to larger networks and provides a sound certificate, but may fail to certify robust examples because of the over-approximation.



\subsection{Ensemble models as defense} 

In classical machine learning, there has been extensive work on ensemble of
models and also diversity measures.  Kuncheva~\cite{kuncheva2003} provides a
comparison of those measures and their usefulness in terms of ensemble accuracy.
However, both the diversity measures and the evaluation of their usefulness was
done in the benign setting. The assumptions that are valid in the benign
setting, such as, the independent and identically distributed inputs no longer
applies in the adversarial setting.  

In the adversarial setting, there have been
several proposed ensemble-based defenses~\cite{feinman2017,tramer2017,pang2019}
that work on the principle of making models diverse from each other.  Feinman et
al.~\cite{feinman2017} use randomness in the dropout layers to build an ensemble
that is robust to adversarial examples. Tramer et al.~\cite{tramer2017} use
ensembles to introduce diversity in the adversarial examples on which to train a
model to be robust.  Pang et al.~\cite{pang2019} promote diversity among
non-maximal class prediction probabilities to make the ensembles diverse. Sharif et al.~\cite{sharif2019n} proposed the \emph{n}--\emph{ML} appproach for adversarial defense. They explicitly train the models in
the ensemble to be diverse from each other, and show experimentally that it leads to robust ensembles. Similarly, Meng et
al.~\cite{meng2020} have shown than an ensemble of $n$ weak but diverse models
can be used as strong adversarial defense. While all of the above works focus on
making models diverse, they evaluate their ensembles using existing attack methods.
None of these prior works have attempted to provide any certification of their
diverse ensemble models against adversaries.

\section{Ensemble Defenses}\label{sec:ensembletypes}
Our goal is to provide robustness guarantees against adversarial examples
for an ensemble of models. The effectiveness of a ensemble defense depends on
the models used in the ensemble and how they are combined. 

First, we present a general framework for ensemble defenses.  Next, we define three different ensemble composition frameworks: \emph{unanimity}, \emph{majority}, and
\emph{averaging}. Section~\ref{sec:certifying} describes the techniques we use to certify each type of ensemble framework.  In  Sections~\ref{subsec:mip} and~\ref{subsec:convex}, we talk about different ways to train the individual models in these ensemble frameworks, and discuss the results. Our methods do not make any assumptions about the models in an ensemble, for example, that they are pre-processing the input in some way and then running the same model. This means our frameworks are general purpose and agnostic of the input domain, but we cannot handle ensemble mechansisms that are nondeterministic (such as sampling Gaussian noise around the input~\cite{salman2020blackbox}, which can only provide probabilistic guarantees).

\shortsection{General frame of ensemble defense}
We use $\mathcal{M}(x)$ to represent the output of a model ensemble, composed of $n$ models, $M_1(\cdot),  M_2(\cdot), \ldots, M_n(\cdot)$ that are composed using one of the composition mechanisms. Furthermore, given an input $x$, true output class $t$, and the output of the ensemble $\mathcal{M}(x)$, we use a \emph{decision function} $\mathcal{D}(\mathcal{M}(x), t)$ to decide whether the given input $x$ is adversarial, benign, or rejected. Functions $\mathcal{M}$ and $\mathcal{D}$ together define an ensemble defense framework. We discuss three such frameworks in this paper.

\shortsection{Unanimity}
In the \emph{unanimity} framework, the output class is  $y_i$ only if
\emph{all} of the component models output $y_i$. If there is any disagreement among the models, the input is rejected: $\mathcal{M}(x) = \bot$. For the unanimity framework, joint robustness is achieved when the \emph{unanimity-robust} property defined below is satisfied.

\begin{definition}\label{def:unanimity} Given an input $x$ with true output class $t$ and allowable adversarial distance $\epsilon$, we call a model ensemble \emph{unanimity-robust} for input $x$ if there exists no
adversarial example $x'$ such that $\Delta(x, x') \leq \epsilon$, and 
$M_1(x') = M_2(x') = \ldots = M_n(x') \neq t$.
\end{definition}

\shortsection{Majority}
In the \emph{majority} framework, the output class is  $y_i$ only if at least $\lfloor\frac{n}{2}\rfloor + 1$ models agree on it. If there is
no majority output class, the input is rejected. Joint robustness is achieved when the \emph{majority-robust} property defined below is satisfied:

\begin{definition}\label{def:majority} Given an input $x$ with true output class $t$, allowable adversarial distance $\epsilon$, a model ensemble is \emph{majority-robust} for input $x$ if there exists no adversarial example $x'$ such that $\Delta(x, x') \leq \epsilon$, and there is no class $j$ such that $j \neq t$ and 
$$
\left| \; \{ i \; | \; i \in [n] \wedge M_i(x) = j\; \} \right| \geq
\left \lfloor{n/2}\right \rfloor + 1.$$
\end{definition}

\shortsection{Averaging}
In the \emph{averaging} framework, we take the average of the second last
layer output vectors of each of component models to produce the final output. This second last layer vector is
typically a softmax or logits layer. We use $Z_i(x)$ to denote the second last layer output vector of model $M_i$, and define the average of the second last layer
vectors as:
$$Z(x) = \frac{Z_1(x) + Z_2(x) + \ldots + Z_n(x)}{n} $$
Then, the output of the ensemble is:
$$\mathcal{M}(x) = \operatorname*{argmax}_j Z(x)_j. $$
Joint robustness for an averaging ensemble is satisfied when the \emph{averaging-robust} property defined below is satisfied.

\begin{definition}\label{def:averaging} Given an input $x$ with true output class $t$, allowable adversarial distance $\epsilon$, we call a model ensemble \emph{averaging-robust} if there exists no
adversarial example $x'$ such that $\Delta(x, x') \leq \epsilon$, and there is no class $j$ such that $j \neq t$ and $\mathcal{M}(x') = j$, where $\mathcal{M}(\cdot)$ is as defined above. 
\end{definition}

\section{Certifying ensemble defenses}\label{sec:certifying}

In this section we introduce our techniques to certify a model ensemble is robust for a given input. Our approach extends the single model methods of Wong and Kolter~\cite{wong2018provable} and Tjeng et al.~\cite{tjeng2017} 
to support certification for model ensembles using the different composition mechanisms.

\subsection{Unanimity and majority frameworks}
The simplest approach for certifying joint robustness for the unanimity and majority frameworks would be to certify the robustness of each model in the ensemble individually for a given input, and then make a joint certification decision based on those individual certifications. This strategy is simple but prone to false negatives.

For the unanimity framework, we can verify that an ensemble is unanimity-robust for input $x$ if at least one of the $n$ models is individually robust for $x$. This provides a simple way to use single-model certifiers to verify robustness for an ensemble, but is stricter than what is required to satisfy Definition~\ref{def:unanimity} since compromising a unanimity ensemble requires finding a single input that is a successful adversarial example against every component model.  Hence, this method may substantially  underestimate the actual robustness, especially when the component models have mostly disjoint vulnerability regions. An input that cannot be certified using the this technique, may still be unanimity-robust. 
Nevertheless, this technique is an easy way to establish a lower bound for joint robustness. 

Similarly, for the majority framework, we can use this approach to verify that an ensemble satisfies majority-robustness (Definition~\ref{def:majority}) by checking if at least $\lfloor\frac{n}{2}\rfloor + 1$ models are individually robust for input $x$. As with the unanimity case, this underestimates the actual robustness, but provides a valid joint robustness lower bound. As we will see in Section~\ref{sec:certifyingresults}, the independent evaluation strategy works fairly well for the unanimity framework, but it is almost useless for the majority framework when the number of models in the ensembles gets large.

\subsection{Averaging models}\label{sec:certifyjoint}

As the averaging framework essentially obtains a single model by combining the $n$
models in the ensemble, we can simply apply the single model certification
techniques to that to achieve robust certification. This gives us robust certification according to Definition~\ref{def:averaging}. Furthermore, we can show that this
certification technique \emph{implies} a certification guarantee for the
unanimity framework. In fact, the certification guarantee for the unanimity framework achieved this way has lower false negative rate than the independent technique described in the previous subsection. We state this formally in Theorem~\ref{thm:averaging}, and provide a proof below.

\begin{theorem}\label{thm:averaging}
If for a given input $x$, the averaging ensemble $\mathcal{M}(x)$ is certified to be robust, then the component models $M_1(\cdot),  M_2(\cdot), \ldots, M_n(\cdot)$ combined with the unanimity framework is also certifiably robust according to Definition~\ref{def:unanimity}.
\end{theorem}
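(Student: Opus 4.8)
The plan is to prove the implication by contradiction, reducing any joint attack on the unanimity ensemble to an attack on the single combined model $Z(\cdot)$ that the averaging framework certifies. Suppose the unanimity ensemble is \emph{not} robust for $x$. By Definition~\ref{def:unanimity} there is then a point $x'$ with $\Delta(x,x')\le\epsilon$ and a class $j\neq t$ with $M_1(x')=M_2(x')=\cdots=M_n(x')=j$.

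The key step is to push this agreement through the averaging operation. Since each component model outputs the class maximizing its second-last-layer vector, $M_i(x')=j$ means $Z_i(x')_j\ge Z_i(x')_k$ for every class $k$. Summing these $n$ inequalities and dividing by $n$ gives, for every class $k$, $Z(x')_j=\frac1n\sum_{i=1}^n Z_i(x')_j\ge\frac1n\sum_{i=1}^n Z_i(x')_k=Z(x')_k$; in particular $Z(x')_j\ge Z(x')_t$. Thus at $x'$ the averaged model scores the wrong class $j$ at least as high as the true class $t$, so the margin $Z(x')_t-Z(x')_j$ is non-positive and $\mathcal{M}(x')$ is not $t$.

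This contradicts the hypothesis. Certifying the averaging ensemble robust for $x$ --- via the single-model methods of Wong and Kolter~\cite{wong2018provable} or Tjeng et al.~\cite{tjeng2017} applied to $Z(\cdot)$ --- yields a strictly positive lower bound on $\min_{x':\Delta(x,x')\le\epsilon}\bigl(Z(x')_t-Z(x')_j\bigr)$ for each $j\neq t$, which is incompatible with the $x'$ just constructed. Hence no such $x'$ exists and the unanimity ensemble is certifiably robust in the sense of Definition~\ref{def:unanimity}.

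I expect the only delicate point to be the treatment of ties in the argmax defining $\mathcal{M}$ and the $M_i$: one must read $M_i(x')=j$ as ``$j$ attains the maximum of $Z_i(x')$'' (independently of any tie-breaking rule), so that the summed inequalities are valid, and one must use that certification produces a strict margin so that the boundary case $Z(x')_j=Z(x')_t$ is already excluded. Everything else reduces to the one-line averaging inequality above, which is why the argument also gives at least as strong a guarantee as requiring a single component model to be individually robust.
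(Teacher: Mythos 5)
Your proof is correct and is essentially the contrapositive of the paper's argument: both hinge on the same one-line averaging inequality, namely that a strictly positive averaged margin $Z(x')_t - Z(x')_j$ forces at least one component margin $Z_i(x')_t - Z_i(x')_j$ to be positive, so the component models cannot unanimously output $j$. Your explicit treatment of argmax ties and of the quantification over target classes is, if anything, slightly more careful than the paper's version.
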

 
\begin{proof}
Let $M_1(\cdot),  M_2(\cdot), \ldots, M_n(\cdot)$ be the $n$ component models of the averaging ensemble, and let $Z_1(\cdot), Z_2(\cdot), \ldots, Z_n(\cdot)$ be the second last layer output vectors for each of these models. As described in section 3, given input $x$ we can define the average of the second last layer outputs as:
$$Z(x) = \frac{Z_1(x) + Z_2(x) + \ldots + Z_n(x)}{n}.$$
The final output class $\mathcal{M}(x)$ is defined as:
$$\mathcal{M}(x) = \operatorname*{argmax}_j Z(x)_j.$$
For input $x$, let $t$ be the true output class and $u$ be the target output class.  Now, if averaging ensemble $\mathcal{M}(x)$ is robust, we can write $Z(x)_t > Z(x)_u$. It follows that $Z(x)_t > Z(x)_u$. Thus,
$$
\sum_{i=1}^{n} Z_i(x)_t > \sum_{i=1}^{n} Z_i(x)_u.
$$
This implies that either
$Z_1(x)_t > Z_1(x)_u$ or $\sum_{i=2}^{n} Z_i(x)_t > \sum_{i=2}^{n} Z_i(x)_u$. Generalizing for any $i$, we must have $M_1(\cdot) \;\text{is robust}\; \text{or} \; M_2(\cdot) \; \text{is robust}\; \text{or} \; \ldots M_n(\cdot) \; \text{is robust}$. Thus, in $\mathcal{M}(x)$, an unanimity ensemble of $M_1, M_2, \ldots, M_n$ is unanimity-robust for target class $u$ according to Definition~\ref{def:unanimity}. Thus, if we can show that a model ensemble is \emph{averaging-robust} for all target classes for an input $x$, then it implies that the unanimity ensemble formed with models $M_1, M_2, \ldots M_n$ is also \emph{unanimity-robust} for input $x$.
\end{proof}

This is again a stricter definition of robustness compared to the unanimity-robustness defined in Definition~\ref{def:unanimity}. This means, even though certification of averaging-robustness implies  unanimity-robustness, the opposite is not true. That is, unanimity-robustness does not imply averaging-robustness. Therefore, we again get a lower bound of unanimity-robustness. However, the averaging-robustness is a less strict definition of robustness than the implicit independent certification definition described in the previous subsection. Thus, this formulation gives us a better estimate of true unanimity-robustness.

In this project, we extend two different single-model certification techniques to provide robustness certification for ensembles. The two different techniques we use are described below:

\textbf{Using MIP verification}: Tjeng et al.~\cite{tjeng2017} have used mixed integer programming (MIP) techniques to evaluate robustness of models against adversarial examples. We apply their certification technique on our averaging ensemble model $\mathcal{M}(\cdot)$ to certify the joint robustness of $M_1, M_2, \ldots M_n$.  However, we found this approach to be computationally intensive, and it is hard to scale to larger models.  Nevertheless we found some interesting results for two very simple MNIST models which we report in the next section.

\textbf{Using convex adversarial polytope}: In order to scale our verification technique to larger models, we next extended the dual network formulation by Wong and Kolter~\cite{wong2018provable} to be able to handle the final averaging layer of the averaging ensemble model $\mathcal{M}(\cdot)$. Because this layer is a linear operation, it can be simulated using a fully connected linear layer in the neural network. And because linear networks are already supported by their framework, our averaging model can thus be verified.

\section{Experiments}\label{sec:experiments}

This section reports on our experiments extending two different certification
techniques, MIPVerify (Section~\ref{subsec:mip}) and convex adversarial polytope (Section~\ref{subsec:mip}), for use with model
ensembles in different frameworks. To conduct the experiments, we produced a set of robust models that are trained to be diverse in particular ways (Section~\ref{subsec:trainingdiverse}) and can be combined in various ensembles. Because of the computational challenges in
scaling these techniques to large models, most of our results are only for the convex adversarial polytope method and for now we only have experimental
results on MNIST. Although this is a simple dataset, and may not be
representative of typical tasks, it is sufficient for exploring methods for testing joint vulnerability, and for providing some insights
into the effectiveness of different types of ensembles. 

\subsection{Training Diverse Robust Models}\label{subsec:trainingdiverse}
To train the models in the ensemble frameworks, we used
the cost-sensitive robustness framework by Zhang et al.~\cite{zhang2018}, which
is implemented based on the convex adversarial polytope work. Cost-sensitive robustness provides a principled way to train diverse models.

Cost-sensitive robust training uses a cost-matrix to specify seed-target
class pairs that are trained to be robust. If $C$ is the cost matrix, $i$ is a
seed class, and $t$ is a target class, then $C_{i,j}=1$ is set when we want to
make the trained model robust against adversarial attacks from seed class $i$ to target class $j$, and $C_{i,j}=0$ is set when we don't want to make the model robust for this particular seed-target pair. For the MNIST dataset, $C$ is a $10 \times 10$ matrix. We configure this cost matrix in different ways to produce different types of model ensembles. This provides a controlled way to produce models with diverse robustness properties, in contrast to ad-hoc diverse training methods that vary model architectures or randomize aspects of training. We expect both types of diversity will be useful in practice, but leave exploring ad-hoc diversity methods to future work.

We conduct experiments on ensembles of two, five, and ten models, trained using different cost matrices. The different ensembles we used are listed below:
\begin{itemize}
  \item Two model ensembles where individual models are:
  \begin{enumerate}
     \item Even seed digits robust and odd seed digits robust.
     \item Even target digits robust and odd target digits robust.
     \item Adversarially-clustered seed digits robust.
     \item Adversarially-clustered target digits robust.
  \end{enumerate}
  \item Five model ensembles with individual models that are:
  \begin{enumerate}
     \item Seed digits modulo-5 robust.
     \item Target digits modulo-5 robust.
     \item Adversarially-clustered seed digits robust.
     \item Adversarially-clustered target digits robust.
  \end{enumerate}
  \item Ten model ensembles:
  \begin{enumerate}
     \item Seed digits robust models.
     \item Target digits robust models.
  \end{enumerate}
\end{itemize} 

A representative selection of different models we use are described in Table~\ref{tab:models}.  The overall robust model is a single model trained to be robust on all seed-target pairs (this is the same as standard certifiable robustness training using the convex adversarial polytope). The other models were trained using different cost-matrices. These cost-matrices are shown in Table~\ref{tab:models}. All these models had the same architecture, and they were trained on $L_{\infty}$ distance of 0.1. Each model had 3 linear and 2 convolutional layers.

\begin{table*}[tb]\label{tab:models}
\centering
\begin{tabular}{ccS[table-format=3.1]S[table-format=3.1]} 
\toprule
      & Cost Matrix  & \multicolumn{1}{c}{Overall Certified} & \multicolumn{1}{c}{Cost-Sensitive} \\ 
Model & ($C_{i,j} = 1$) & \multicolumn{1}{c}{Robust Accuracy} & \multicolumn{1}{c}{Robust Accuracy} \\ \midrule
Overall Robust & $\text{ for all }i,j$ & 72.7\% & 72.7\% \\ 
Even-seeds Robust & $i \in \{0,2,4,6,8\}$ & 38.0\% & 77.5\% \\ 
Odd-targets Robust & $j \in \{1,3,5,7,9\}$ & 21.1\% & 86.5\% \\ 
Seeds (2,3,5,6,8) Robust & $i \in \{2,3,5,6,8\}$ & 38.1\% & 74.0\% \\ 
Targets (0,1,4,7,9) Robust & $ j \in \{0,1,4,7,9\}$ & 11.1\% & 89.7\% \\ \midrule
Seed-modulo-5 = 0 Robust & $i \in \{0,5\}$ & 16.7\% & 88.0\% \\ 
Target-modulo-5 = 3 Robust & $j \in \{3,8\}$ & 8.3\% & 94.0\% \\ 
Seeds (3,5) Robust & $i \in \{3,5\}$ & 15.9\% & 81.1\% \\ 
Targets (1,7) Robust & $j \in \{1,7\}$ & 1.4\% & 97.0\% \\ \midrule
Seed-modulo-10 = 3 Robust & $i \in \{3\}$ & 8.5\% & 84.2\% \\ 
Target-modulo-10 = 7 Robust & $j \in \{7\}$ & 0.2\% & 98.4\% \\ \bottomrule
\end{tabular}
 \caption{Models trained using cost-sensitive robustness for use in ensembles. One representative model is shown from each ensemble for the sake of brevity. We show the robust cost-matrix for each model by listing the $i$ and $j$ values where $C_{i,j} = 1$ ($C_{i,j} = 0$ for all others), as well as its overall robust and cost-sensitive accuracy.}
\end{table*}


The adversarial clustering was done to ensure digits that appear visually most
similar to each other are grouped together. This similarity between a pair of
digits was measured in terms of how easily either digit of the pair can be
adversarially targeted to the other digit. These results are consistent with our
intuitions about visual similarity --- for example, MNIST digits 2, 3, 5, 8 are
visually quite similar, and we also found them to be adversarially similar,
hence clustered together.

\subsection{Certifying using MIPVerify}\label{subsec:mip}
We used the MIP verifier on two shallow MNIST networks. One of the networks had two fully-connected layers, and the other had three fully-connected layers. The two-layer network was trained to be robust on even-seeds and the three-layer network  on odd-seeds.  We used adversarial training using PGD attacks to robustly train the models. Even with adversarial training, however, the models were not really robust. Even at $L_{\infty}$ perturbation of $\epsilon=0.02$, which is very low for MNIST dataset, the models only had robust accuracy of 23\% and 28\% respectively. The reason for the lack of robustness is because the networks were very shallow and lacked a convolutional layer. We could not make the models more complex because doing so makes the robust certification too performance-intensive. Still, even with these non-robust models, we can see some interesting results for the ensemble of the two models. We discuss them below.

To understand the robustness possible by constructing an ensemble of the two models, we compute the minimal $L_1$ adversarial perturbation for 100 test seeds
for the two single networks and the ensemble average network built from them. We used $L_1$ distance because the MIP verifier performs better with this, due its
linear nature, compared to $L_2$ or $L_{\infty}$ distances.  More than 90\% of
the seeds were verified within 240 seconds.  Figure~\ref{fig:fig1} shows the
number of seeds that can be proven robust using MIP verification at a given
$L_1$ distance for each model independently, the maximum of the two models, and
the ensemble average model. The verifier was not always able to find
the minimal necessary perturbation for the ensemble network within the time
limit of 240 seconds. In those cases, we reported the maximum adversarial
distance proven to be safe at the time when time limit exceeded -- which
respresents an upper bound of minimal adversarial perturbation. We note from the
figure that number of examples certified by the ensemble average model is higher
than that for either individual model at all minimal $L_1$ distances.

In general, though, we found the MIP Verificaton does not scale well with networks complex enough to be useful in practice. Deeper networks and use of convolutional layers makes the performance of MIP Verify significantly worse. Furthermore, we found that robust networks were harder to verify than non-robust networks with this framework. Because of this, we decided not to use this approach for the remaining experiments which is more practical networks.

\begin{figure}[tb]
\centering
\includegraphics[width=0.7\textwidth]{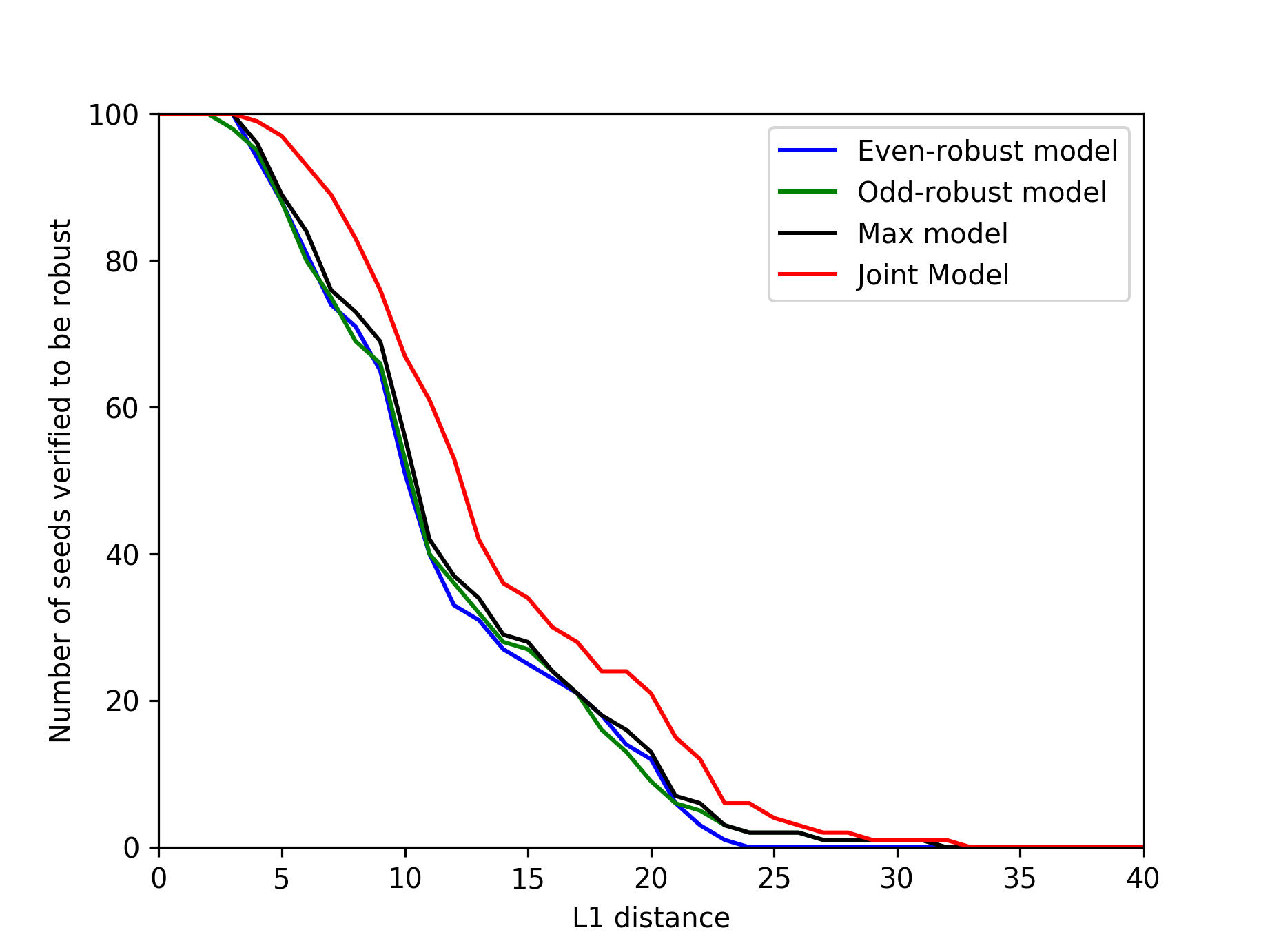}
\caption{Number of test seeds certified to be robust by the single models and
	the ensemble average model for different $L_1$ perturbation distance constraints.}
\label{fig:fig1}
\end{figure}

\subsection{Convex adversarial polytope certification} \label{subsec:convex}\label{sec:certifyingresults}
As the MIP verification does not scale well to larger networks, for our
remaining experiments we use the convex adversarial polytope formulation by Wong
et al.~\cite{wong2018provable}.  We conduct experiments with ensembles of two, five, and ten models, using the models described in Table~\ref{tab:models}. Table~\ref{tab:combined} summarizes the results.

\shortsection{Joint robustness of two-model ensembles}\label{exp:two}
We evaluated two-model ensembles with different choices for the models, using the three composition methods. We ensured that the averaging ensemble could be treated as a single sequential
model made of fully-connected linear layers, so that the robust verification formulation was still valid when applied on it. To do this, we had to first convert the convolutional layers of the single models into linear layers, and then the linear layers of the two models were combined to create larger linear layers for the joint model. We can then calculate the robust error rates of the ensemble average model, as well as the unanimity and majority ensembles for the two-model ensemble. The key here is that no changes were needed to be made to the existing verification framework. 

Table~\ref{tab:combined} shows each ensemble's robust accuracy. For two-model ensembles, the unanimity and the majority frameworks are the same. Thus we can use the same ensemble average technique to certify them. For adversarial clustering into 2-models, we used two clusters -- one for digits (2, 3, 5, 6, 8) and the other for digits (0, 1, 4, 7, 9). 

Compared to the single overall robust model, where 72.7\% of the test examples can be certified robust, with two-model ensembles we can certify up to 78.1\% of seeds as robust (using the averaging composition with the adversarially clustered seed robust models). 

\begin{table*}[tb]
\centering
\begin{tabular}{ccccc} 
\toprule
Models & Composition & Certified Robust & Normal Test Error & Rejection \\\midrule[0.2ex]
Overall Robust & Single & 72.7\% & 5.0\%  & - \\ \midrule[0.2ex]

\multirow{2}{*}{Even/Odd-seed} & Unanimity & 74.7\% & 1.3\%  & 5.0\% \\
 & Average & 75.9\% & 3.3\%  & - \\  \midrule


\multirow{2}{*}{Clustered seed (2)} & Unanimity & 77.3\% & 1.5\%  & 6.0\% \\
 & Average & 78.1\% & 3.0\%  & - \\ \midrule[0.2ex]

\multirow{2}{*}{Seed-modulo-5} & Unanimity & 84.1\% & 0.3\%  & 8.1\% \\
 & Average & 85.3\% & 1.7\%  & - \\ \midrule


\multirow{2}{*}{Clustered seed (5)} & Unanimity & 83.8\% & 0.7\%  & 7.1\% \\
 & Average & 84.3\% & 1.4\%  & - \\ \midrule[0.2ex]


\multirow{2}{*}{Seed-modulo-10} & Unanimity & 85.4\% & 0.1\%  & 9.7\% \\
 & Average & 85.6\% & 1.5\%  & - \\ \bottomrule


\end{tabular}

\caption{Robust certification and normal test error and rejection rates for single model and two, five, and ten-model ensembles for $L_\infty$ adversarial examples with $\epsilon=0.1$.}
\label{tab:combined}
\end{table*}

\begin{figure*}[ht!]
\centering
\subfigure[Cluster Two-Model Ensemble]
{\includegraphics[width=0.65\columnwidth]{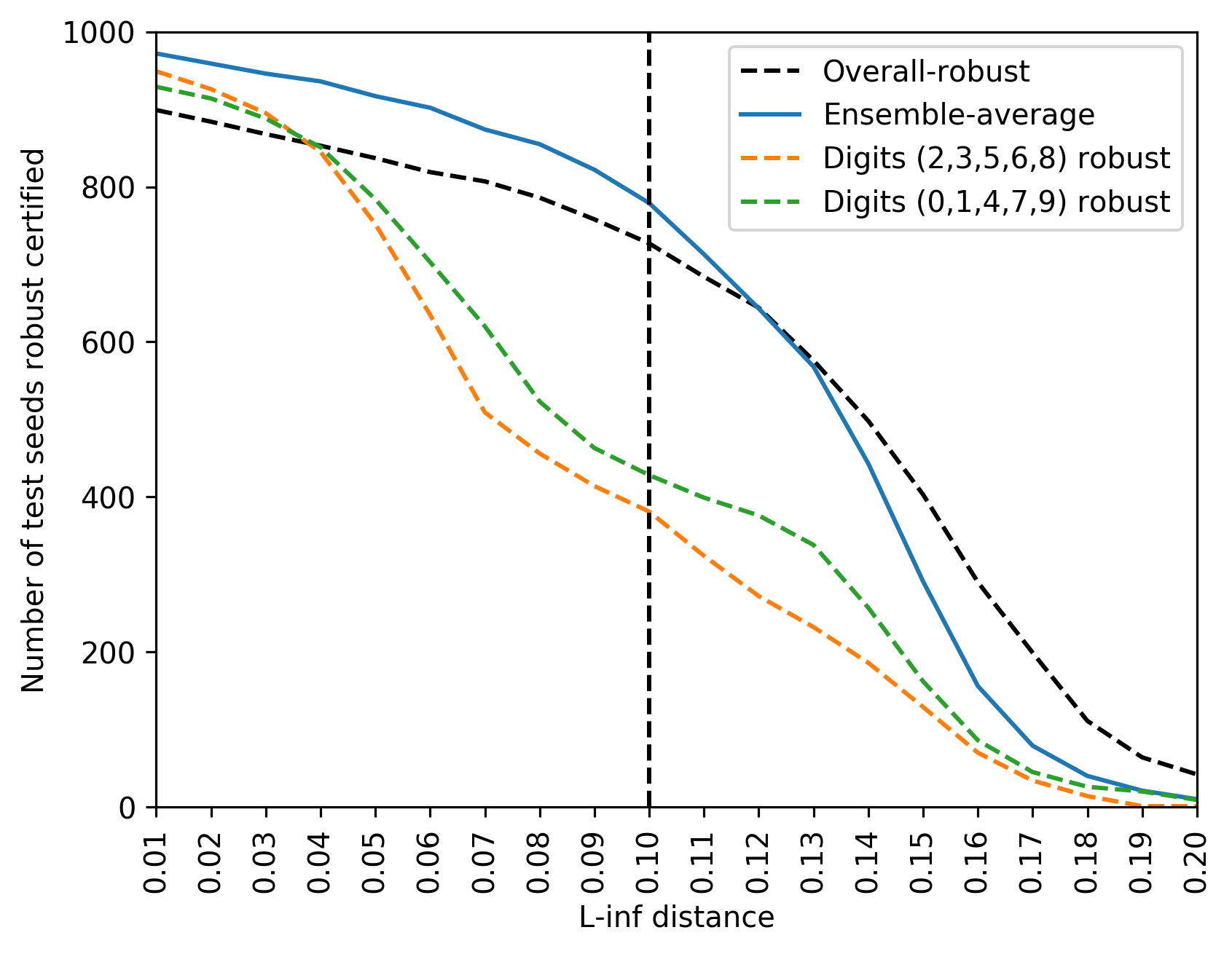}
\label{fig:fig_opt2_seed}
}
\\
\subfigure[Modulo-5 Seeds Ensemble]{
\includegraphics[width=0.47\columnwidth]{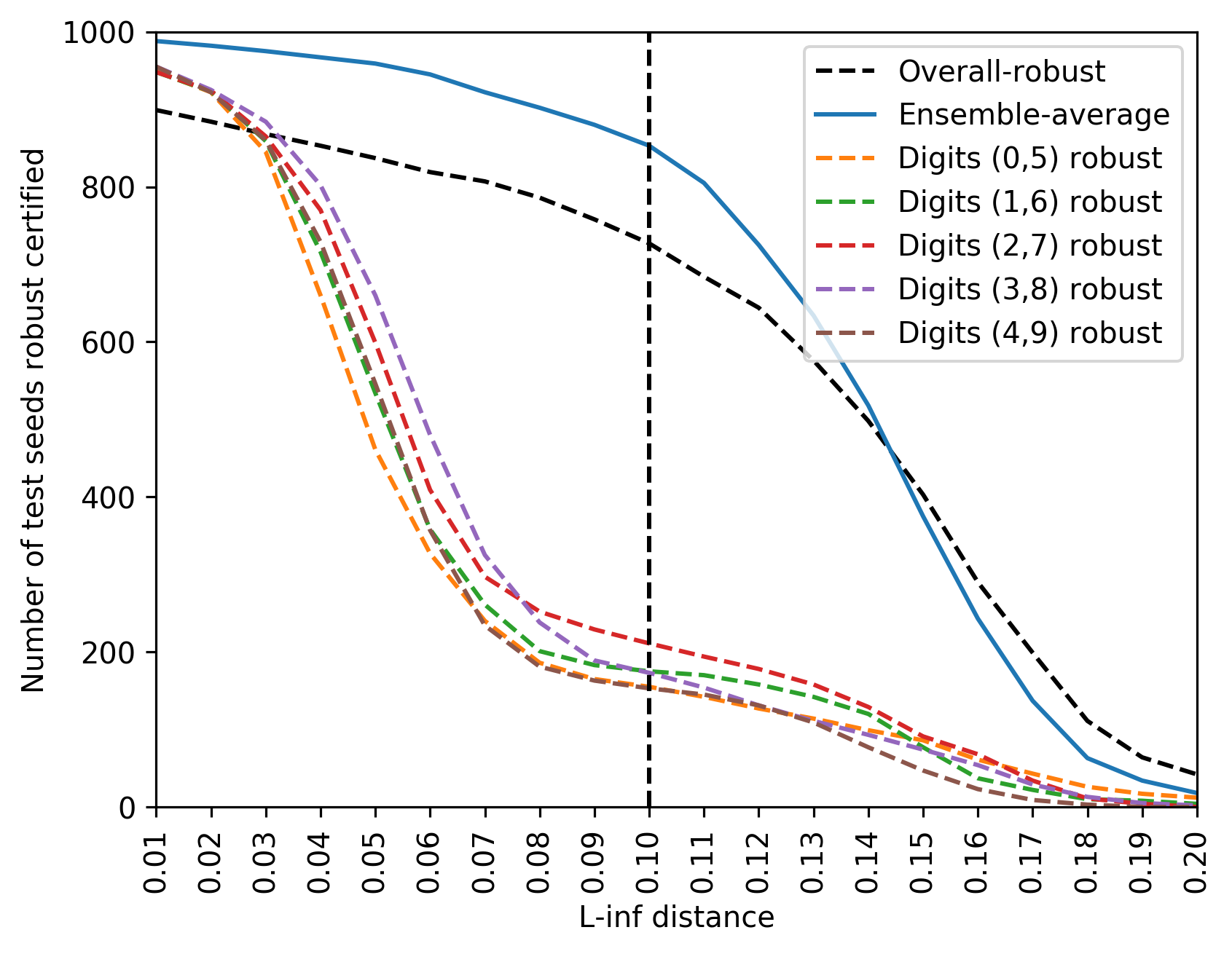}
\label{fig:fig_mod_5_seed}}
\subfigure[Ten-model ensemble]{
\includegraphics[width=0.47\columnwidth]{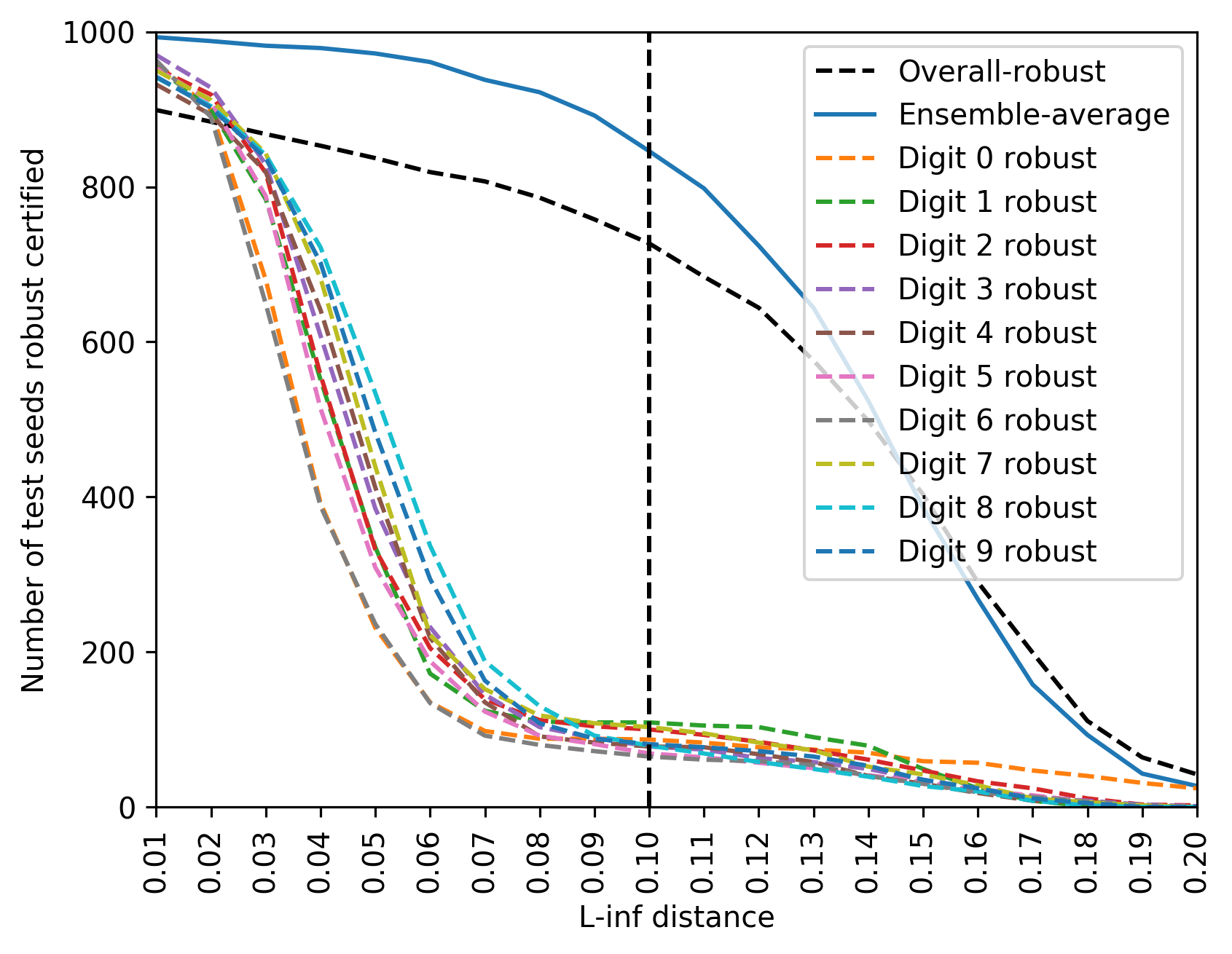}
\label{fig:fig_mod_10_seed}}

\caption{Number of test examples certified to be jointly robust using each model ensemble for different $\epsilon$ values.}\label{fig:varyepsilon}
\end{figure*}

We reran all the above experiments for all $\epsilon$ values from 0.01 to 0.20
to see how the joint robustness changes as the attacks get stronger. Figure~\ref{fig:fig_opt2_seed} shows the results from the adversarially clustered seeds two-model ensemble; the results for the other ensembles show similar patterns and are deferred to Appendix~\ref{sec:appendix}. For $\epsilon$ values up to 0.1, which is the value used for training the robust models, the ensemble model is able to certify more seeds compared to the single overall robust model. We also note that the models that are trained to be target-robust, rather than seed-robust, perform much worse. With even and odd target-robust models we were able to certify only 35.6\% of test examples. We believe the reason for this is that the evaluation criteria of robustness is inherently biased against models that are trained to target-robust. Because, when evaluating, we always start from some test seed, and try to find an adversarial example from that seed -- which is not what the target-robust models are explicitly trained to prevent.

\shortsection{Five-model Ensembles}\label{exp:five}
Our joint certification framework can be extended to ensembles of any number of models. We trained the five models to be robust on modulo-5 seed digits. Ensembles of these models had better certified robustness than the best two-model ensembles. For example, with averaging composition 85.3\% of test examples can be certified robust (compared to our previous best result of 78.1\% with two
models). Figure~\ref{fig:fig_mod_5_seed} shows how the number of certifiable test seeds drops with increasing $\epsilon$, but worth noting is the large gap between any individual model's certifiable robustness and that for the average ensemble. We also trained model by adversarially clustering into 5-models --  for digits (4, 9), (3, 5), (2,8), (0, 6) and (1, 7). For the clustered seed robust ensemble, the results were slightly worse (84.3\%) than modulo-5 seeds robust model. One difference between the two-model and five-model ensembles is that in the latter, the unanimity and the majority frameworks are different. We found that independent certification does not really work for majority framework. We were able to certify almost no test seeds for the majority framework for five-model ensembles.

\shortsection{Ten-model Ensembles}
\label{exp:ten}
Finally, we tried ensembles of ten models, each trained to be robust for a selected seed digit ($0, 1, 2, \ldots, 9$). The certified robust rate of the 10-model ensemble trained to be seed robust was 85.6\%. This is slightly higher
than the 5-model ensemble (85.3\%), but perhaps not worth the extra performance cost. It is notable, though, that the unanimity model reduces the normal test error for this ensemble to 0.1\%. This means that out of 1000 test seeds, 853 were certified to be robust, 48 were correctly classified but could not be certified, 97 were rejected due to disagreement among the models, and 1 was incorrectly classified by all 10 models. Figure~\ref{fig:badexamples} shows the one test example where all models agree on a predicted class but it is not the given label (Figure~\ref{fig:incorrect}, and selected typical rejected examples from the 97 tests where the models disagree (Figure~\ref{fig:rejected}).

\begin{figure*}
\centering

\subfigure[The one test example that is incorrectly classified by all ten models. (Labeled as {\sf 6}, predicted as {\sf 1}.)]{
$\qquad\qquad\qquad\quad$ 
\includegraphics[width=0.1\textwidth]{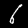}
$\qquad\qquad\qquad\quad$
\label{fig:incorrect}
}
\ \ 
\subfigure[Examples of rejected test examples for which models disagree on the predicted class.]{
\centering
\includegraphics[width=0.1\textwidth]{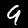} \
\includegraphics[width=0.1\textwidth]{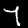} \
\includegraphics[width=0.1\textwidth]{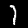} \
\includegraphics[width=0.1\textwidth]{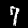}
\label{fig:rejected}}
\caption{Test examples that are misclassified or rejected by the ten-model ensemble.}\label{fig:badexamples}
\end{figure*}

\shortsection{Summary}
Figure~\ref{fig:comparison} compares the robust certification rate for the two, five, and ten-model ensembles. Clustered seed robust models
generally tend to perform well, although just random modulo seed robust models perform almost just as well.  

One potential issue with any ensemble models is the possibility of false positives. In our case, the use of multiple models in the unanimity and majority frameworks also introduce the possibility of \emph{rejecting}
benign inputs. As the number of models in a unanimity  ensemble increases, the rejection rate on normal inputs increases since if any one model disagrees the input is rejected. However, if the false rejection rate is reasonably low, then in many situations that may be an acceptable trade-off for higher adversarial robustness. The results in Table~\ref{tab:combined} are consistent with this, but show that even the ten-model unanimity ensemble has a rejection rate below 10\%. For more challenging classification tasks, strict unanimity composition may not be an option if rejection rates become unacceptable, but could be replaced by relaxed notions (for example, considering a set of related classes as equivalent for agreement purposes, or allowing some small fraction of models to disagree).

\begin{figure}[tb]
\centering
\includegraphics[width=0.7\textwidth]{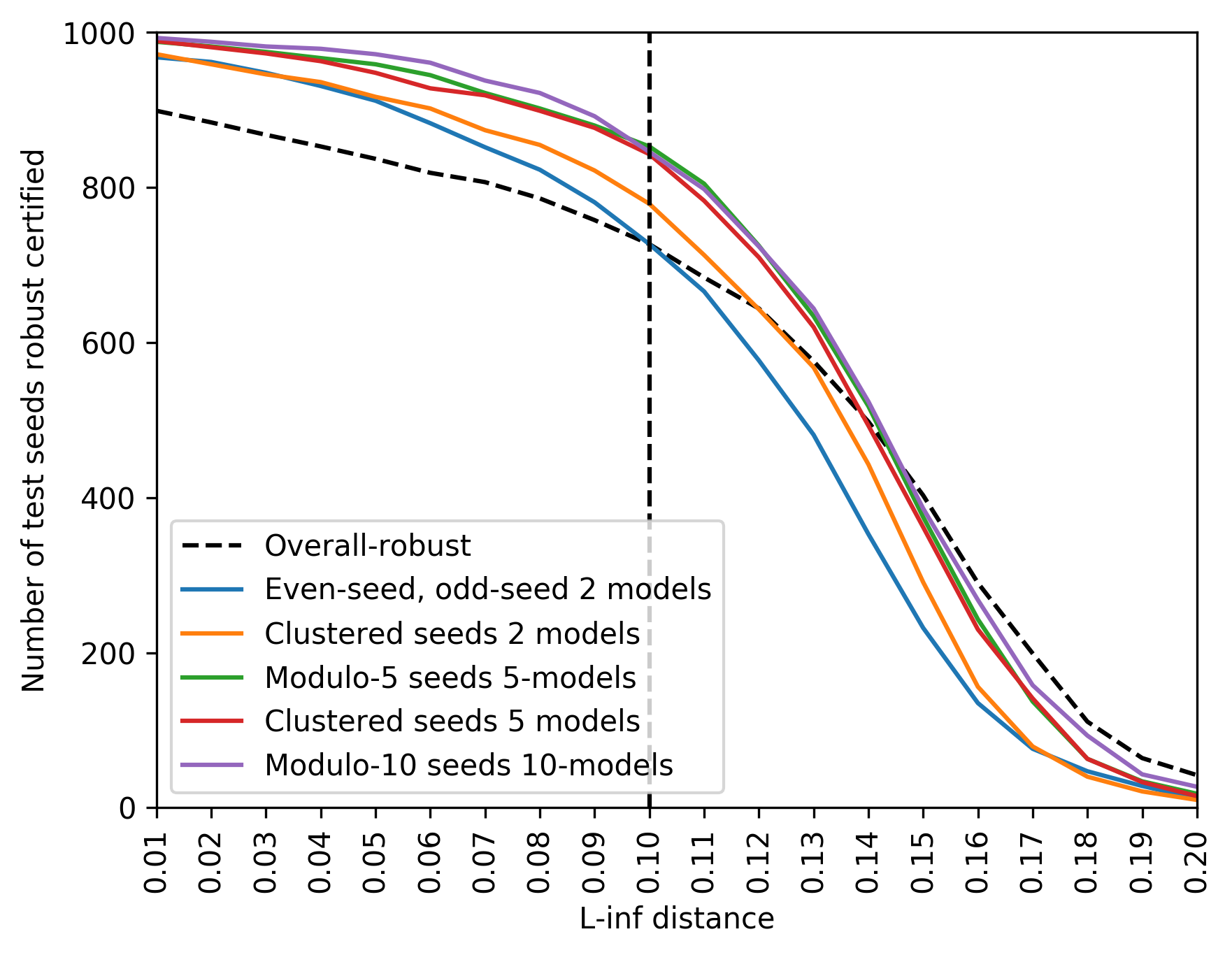}
\caption{Number of test seeds certified to be jointly robust using ten models,
	five models, two models and single model for different $\epsilon$
	values.}
\label{fig:comparison}
\end{figure}

\section{Conclusion} 

We extended robust certification models designed for single models to provide joint robustness guarantees for ensembles of models. Our novel joint-model formulation technique can be used to extend certification frameworks to provide certifiable robustness guarantees that are substantially stronger than what can be obtained using the verification techniques independently. Furthermore, we have shown that cost-sensitive robustness training with diverse cost matrices can produce models that are diverse with respect to joint robustness goals. The results from our experiments suggest that
ensembles of models can be useful for increasing the robustness of models
against adversarial examples. These is a vast space of possible ways to
train models to be diverse, and ways to use multiple models in an ensemble, that may lead to even more robustness. As we noted in our motivation, however, without efforts to certify joint robustness, or to ensure that models in an ensemble are diverse in their vulnerability regions, the apparent effectiveness of an ensemble may be misleading. Although the methods we have used cannot yet scale beyond tiny models, our results provide encouragement that ensembles can be constructed that provide strong robustness against even the most sophisticated adversaries.


\subsection*{Availability}
\noindent
Open source code for our implementation and for reproducing our experiments is available at: \url{https://github.com/jonas-maj/ensemble-adversarial-robustness}. 

\subsection*{Acknowledgements}
\noindent

We thank members of the Security Research Group, Mohammad Mahmoody,
Vicente Ord\'{o}\~{n}ez Rom\'{a}n, and Yuan Tian for helpful comments on this work, and thank Xiao Zhang, Eric Wong, and  Vincent Tjeng, Kai Xiao, and Russ Tedrake for their open source projects that we made use of in our experiments. This research was sponsored in part by the National Science Foundation \#1804603 (Center for Trustworthy Machine Learning, SaTC Frontier: End-to-End Trustworthiness of Machine-Learning Systems), and additional support from Amazon, Google, and Intel.

\clearpage
\bibliographystyle{plain}
\bibliography{references, aml}

\begin{thebibliography}{10}

\bibitem{biggio2013defn}
Battista Biggio, Igino Corona, Davide Maiorca, Blaine Nelson, Nedim
  {\v{S}}rndi{\'{c}}, Pavel Laskov, Giorgio Giacinto, and Fabio Roli.
\newblock Evasion attacks against machine learning at test time.
\newblock In {\em Machine Learning and Knowledge Discovery in Databases}, 2013.

\bibitem{carlini2017}
Nicholas Carlini and David Wagner.
\newblock Towards evaluating the robustness of neural networks.
\newblock In {\em IEEE Symposium on Security and Privacy}, 2017.

\bibitem{cohen2019certified}
Jeremy Cohen, Elan Rosenfeld, and Zico Kolter.
\newblock Certified adversarial robustness via randomized smoothing.
\newblock In {\em International Conference on Machine Learning}, 2019.

\bibitem{evtimov2017robust}
Ivan Evtimov, Kevin Eykholt, Earlence Fernandes, Tadayoshi Kohno, Bo~Li, Atul
  Prakash, Amir Rahmati, and Dawn Song.
\newblock Robust physical-world attacks on deep learning models.
\newblock In {\em Conference on Computer Vision and Pattern Recognition}, 2018.

\bibitem{fawzi2018adversarial}
Alhussein Fawzi, Hamza Fawzi, and Omar Fawzi.
\newblock Adversarial vulnerability for any classifier.
\newblock In {\em Conference on Neural Information Processing Systems}, 2018.

\bibitem{feinman2017}
Reuben Feinman, Ryan~R Curtin, Saurabh Shintre, and Andrew~B Gardner.
\newblock Detecting adversarial samples from artifacts.
\newblock {\em arXiv preprint arXiv:1703.00410}, 2017.

\bibitem{gilmer2018adversarial}
Justin Gilmer, Luke Metz, Fartash Faghri, Samuel~S Schoenholz, Maithra Raghu,
  Martin Wattenberg, and Ian Goodfellow.
\newblock Adversarial spheres.
\newblock {\em arXiv preprint arXiv:1801.02774}, 2018.

\bibitem{goodfellow2015explaining}
Ian Goodfellow, Jonathon Shlens, and Christian Szegedy.
\newblock Explaining and harnessing adversarial examples.
\newblock In {\em International Conference on Learning Representations}, 2015.

\bibitem{gowal2018}
Sven Gowal, Krishnamurthy Dvijotham, Robert Stanforth, Rudy Bunel, Chongli Qin,
  Jonathan Uesato, Relja Arandjelovic, Timothy Mann, and Pushmeet Kohli.
\newblock Scalable verified training for provably robust image classification.
\newblock In {\em International Conference on Computer Vision}, 2019.

\bibitem{he2016deep}
Kaiming He, Xiangyu Zhang, Shaoqing Ren, and Jian Sun.
\newblock Deep residual learning for image recognition.
\newblock In {\em IEEE Conference on Computer Vision and Pattern Recognition},
  2016.

\bibitem{alexnet}
Alex Krizhevsky, Ilya Sutskever, and Geoffrey~E Hinton.
\newblock Imagenet classification with deep convolutional neural networks.
\newblock In {\em Advances in Neural Information Processing Systems}, 2012.

\bibitem{kuncheva2003}
Ludmila~I Kuncheva and Christopher~J Whitaker.
\newblock Measures of diversity in classifier ensembles and their relationship
  with the ensemble accuracy.
\newblock {\em Machine learning}, 51(2):181--207, 2003.

\bibitem{madry2017towards}
Aleksander Madry, Aleksandar Makelov, Ludwig Schmidt, Dimitris Tsipras, and
  Adrian Vladu.
\newblock Towards deep learning models resistant to adversarial attacks.
\newblock In {\em International Conference on Learning Representations}, 2018.

\bibitem{mahloujifar2018curse}
Saeed Mahloujifar, Dimitrios Diochnos, and Mohammad Mahmoody.
\newblock The curse of concentration in robust learning: Evasion and poisoning
  attacks from concentration of measure.
\newblock In {\em AAAI Conference on Artificial Intelligence}, 2019.

\bibitem{meng2017magnet}
Dongyu Meng and Hao Chen.
\newblock {MagNet}: a two-pronged defense against adversarial examples.
\newblock In {\em ACM Conference on Computer and Communications Security},
  2017.

\bibitem{meng2020}
Ying Meng, Jianhai Su, Jason O'Kane, and Pooyan Jamshidi.
\newblock Ensembles of many diverse weak defenses can be strong: Defending deep
  neural networks against adversarial attacks.
\newblock {\em arXiv preprint arXiv:2001.00308}, 2020.

\bibitem{moosavi2016deepfool}
Seyed-Mohsen Moosavi-Dezfooli, Alhussein Fawzi, and Pascal Frossard.
\newblock Deep{F}ool: a simple and accurate method to fool deep neural
  networks.
\newblock In {\em IEEE Conference on Computer Vision and Pattern Recognition},
  2016.

\bibitem{pang2019}
Tianyu Pang, Kun Xu, Chao Du, Ning Chen, and Jun Zhu.
\newblock Improving adversarial robustness via promoting ensemble diversity.
\newblock {\em arXiv preprint arXiv:1901.08846}, 2019.

\bibitem{papernot2016limitations}
Nicolas Papernot, Patrick McDaniel, Somesh Jha, Matt Fredrikson, Z~Berkay
  Celik, and Ananthram Swami.
\newblock The limitations of deep learning in adversarial settings.
\newblock In {\em IEEE European Symposium on Security and Privacy}, 2016.

\bibitem{raghunathan2018certified}
Aditi Raghunathan, Jacob Steinhardt, and Percy Liang.
\newblock Certified defenses against adversarial examples.
\newblock In {\em International Conference on Learning Representations}, 2018.

\bibitem{salman2020blackbox}
Hadi Salman, Mingjie Sun, Greg Yang, Ashish Kapoor, and J.~Zico Kolter.
\newblock Black-box smoothing: A provable defense for pretrained classifiers.
\newblock {\em arXiv:2003.01908}, 2020.

\bibitem{schmidt2018}
Ludwig Schmidt, Shibani Santurkar, Dimitris Tsipras, Kunal Talwar, and
  Aleksander Madry.
\newblock Adversarially robust generalization requires more data.
\newblock In {\em Advances in Neural Information Processing Systems}, 2017.

\bibitem{shafahi2018adversarial}
Ali Shafahi, W~Ronny Huang, Christoph Studer, Soheil Feizi, and Tom Goldstein.
\newblock Are adversarial examples inevitable?
\newblock In {\em International Conference on Learning Representations}, 2019.

\bibitem{sharif2019n}
Mahmood Sharif, Lujo Bauer, and Michael~K Reiter.
\newblock $n$-{ML}: Mitigating adversarial examples via ensembles of
  topologically manipulated classifiers.
\newblock {\em arXiv preprint arXiv:1912.09059}, 2019.

\bibitem{szegedy2014intriguing}
Christian Szegedy, Wojciech Zaremba, Ilya Sutskever, Joan Bruna, Dumitru Erhan,
  Ian Goodfellow, and Rob Fergus.
\newblock Intriguing properties of neural networks.
\newblock In {\em International Conference on Learning Representations}, 2014.

\bibitem{tjeng2017}
Vincent Tjeng, Kai Xiao, and Russ Tedrake.
\newblock Evaluating robustness of neural networks with {M}ixed {I}nteger
  {P}rogramming.
\newblock In {\em International Conference on Learning Representations}, 2019.

\bibitem{tramer2020adaptive}
Florian Tramer, Nicholas Carlini, Wieland Brendel, and Aleksander Madry.
\newblock On adaptive attacks to adversarial example defenses.
\newblock {\em arXiv:2002.08347}, 2020.

\bibitem{tramer2017}
Florian Tram{\`e}r, Alexey Kurakin, Nicolas Papernot, Ian Goodfellow, Dan
  Boneh, and Patrick McDaniel.
\newblock Ensemble adversarial training: Attacks and defenses.
\newblock In {\em International Conference on Learning Representations}, 2018.

\bibitem{wong2018provable}
Eric Wong and Zico Kolter.
\newblock Provable defenses against adversarial examples via the convex outer
  adversarial polytope.
\newblock In {\em International Conference on Machine Learning}, 2018.

\bibitem{xie2017}
Cihang Xie, Jianyu Wang, Zhishuai Zhang, Zhou Ren, and Alan Yuille.
\newblock Mitigating adversarial effects through randomization.
\newblock In {\em International Confernce on Learning Representations}, 2018.

\bibitem{xu2017feature}
Weilin Xu, David Evans, and Yanjun Qi.
\newblock Feature {S}queezing: Detecting adversarial examples in deep neural
  networks.
\newblock In {\em Network and Distributed Systems Security Symposium}, 2018.

\bibitem{zhang2018}
Xiao Zhang and David Evans.
\newblock Cost-sensitive robustness against adversarial examples.
\newblock In {\em International Conference on Learning Representations}, 2019.

\end{thebibliography}
\newpage
\appendix

\section{Additional Experimental Results}
\label{sec:appendix}

\begin{figure}[h]
\centering
\subfigure[Even/odd seeds robust]{
\includegraphics[width=0.3\textwidth]{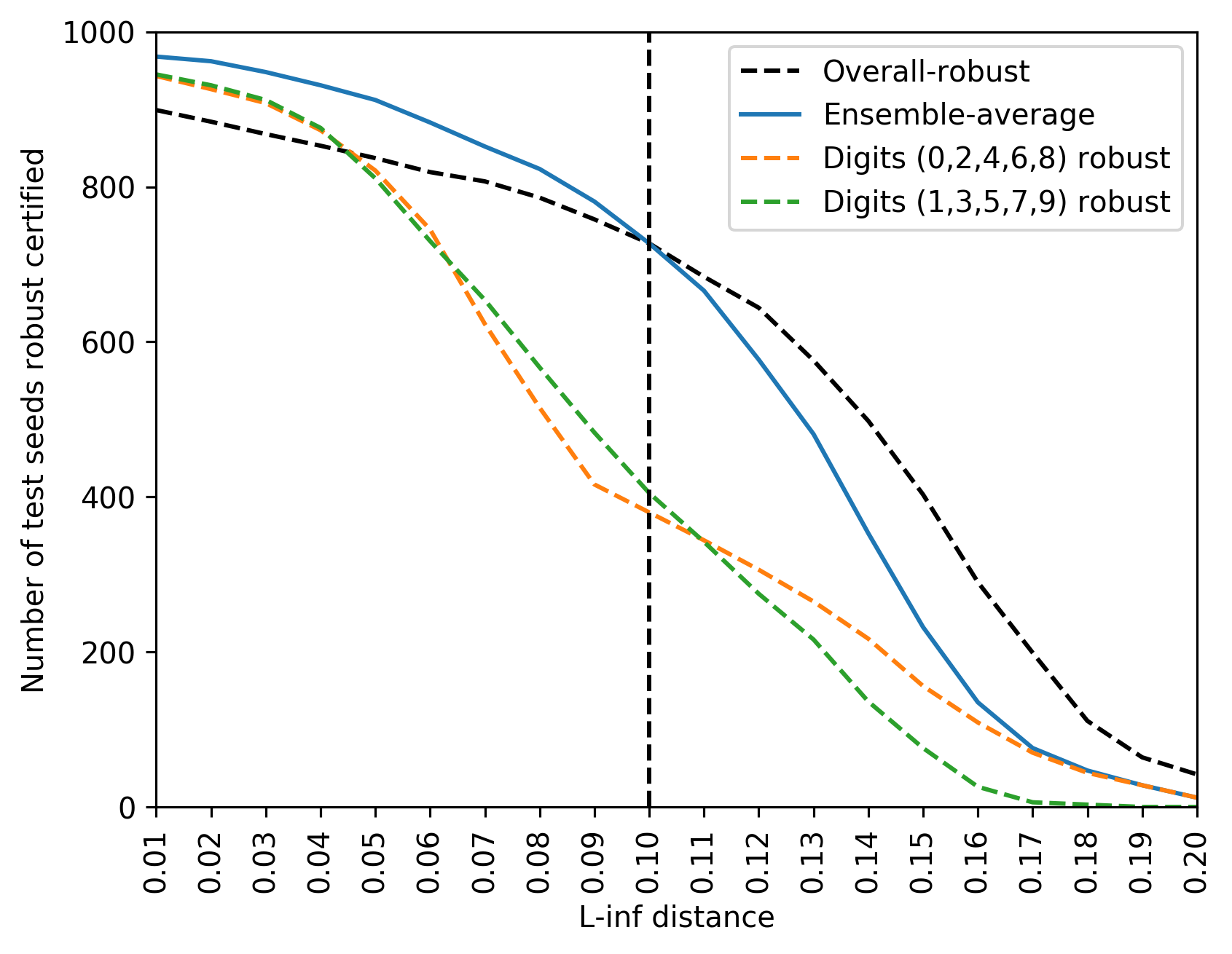}
\label{fig:fig_mod_2_seed}}
\ 
\subfigure[Even/odd targets robust]{
\includegraphics[width=0.3\textwidth]{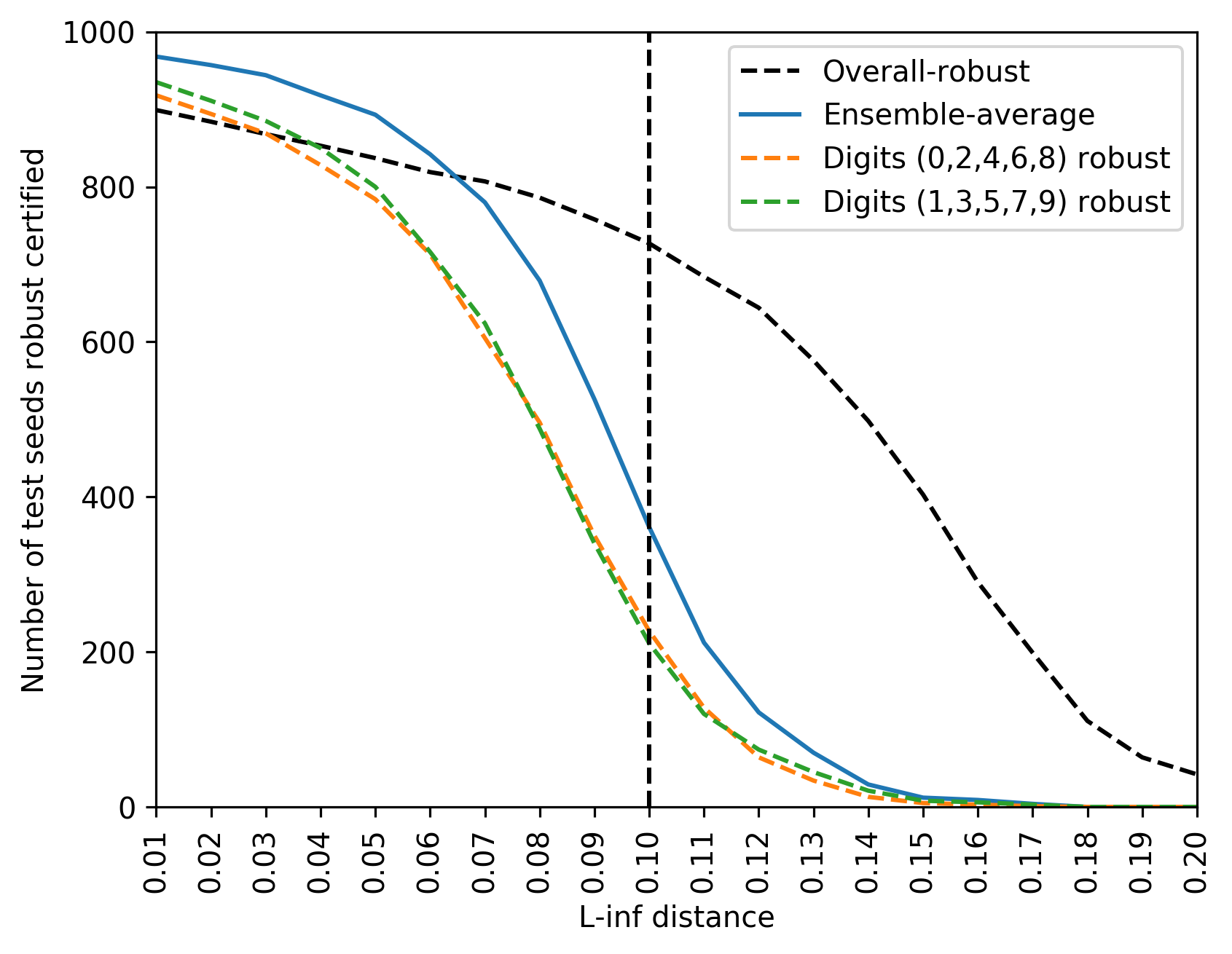}
\label{fig:fig_mod_2_target}}
\ 
\subfigure[Clustered Targets]{
\includegraphics[width=0.3\textwidth]{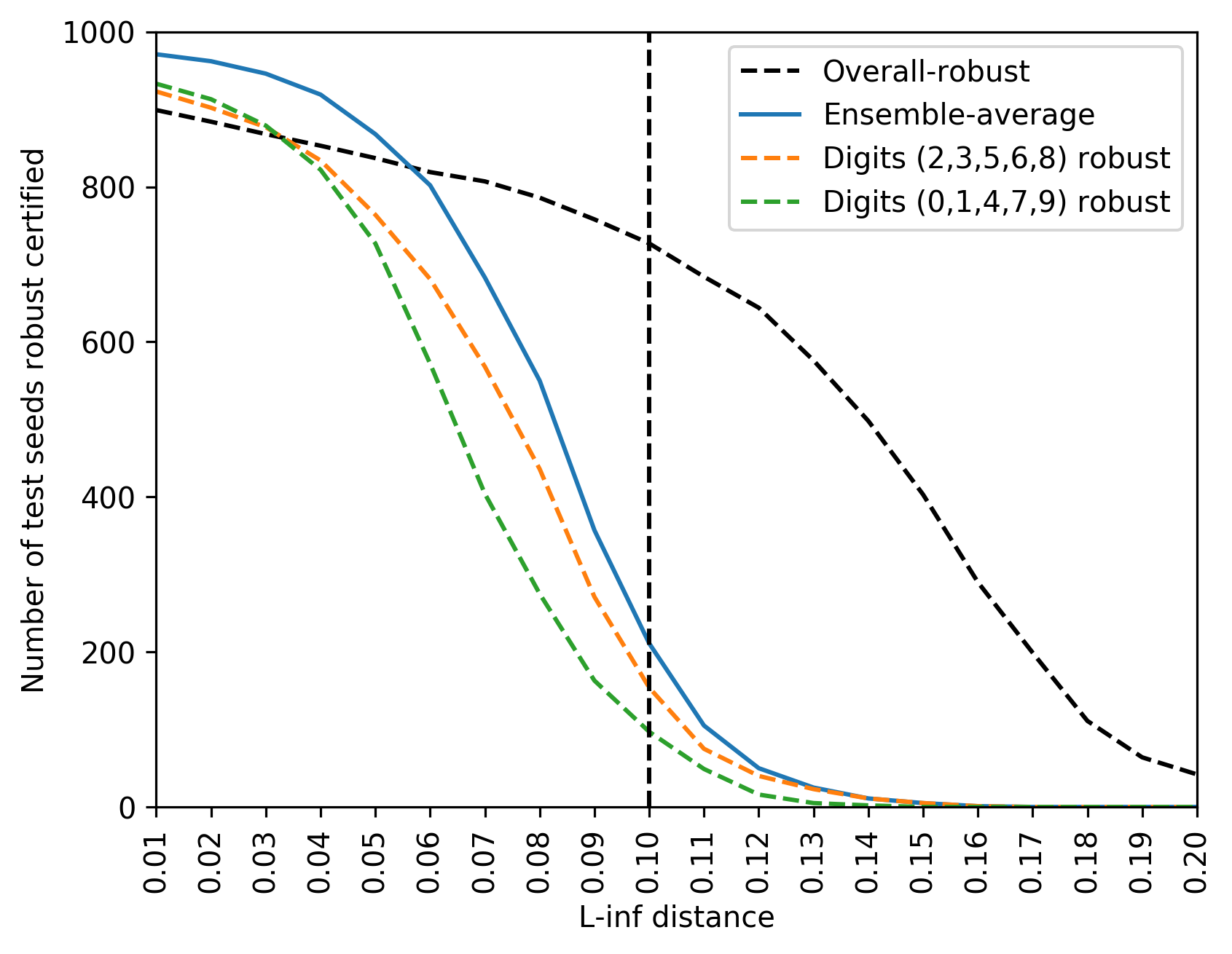}
\label{fig:fig_opt_2_target}
}
\caption{Number of test seeds certified to be jointly robust using the
	individual models and different two-model ensembles average framework for different $\epsilon$ values.}
\end{figure}

\begin{figure}[ht!]
\centering
\subfigure[Modulo-5 Targets Robust]{
\includegraphics[width=0.3\textwidth]{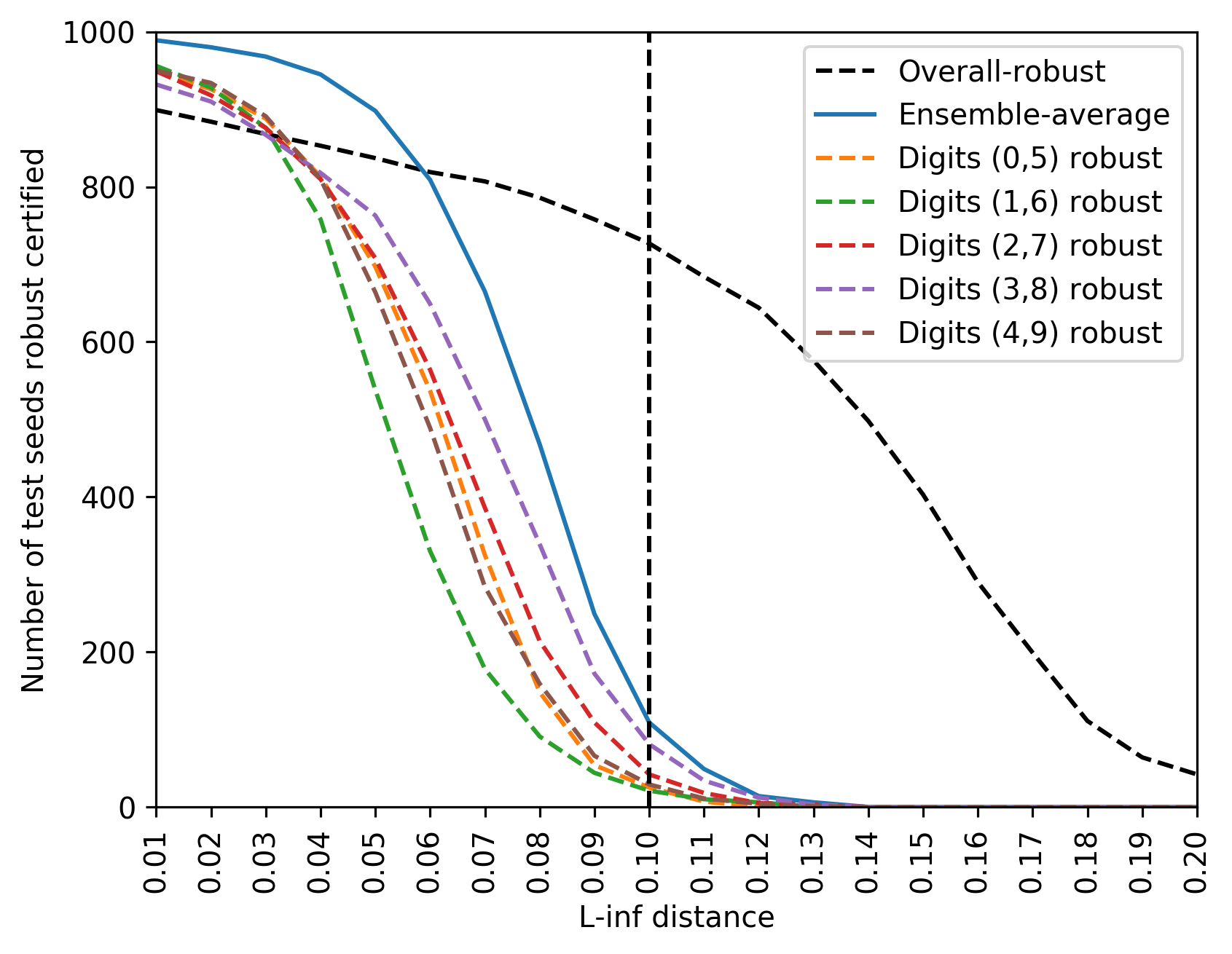}
\label{fig:fig_mod_5_target}}
\ 
\subfigure[Modulo-5 Seeds Robust]{
\includegraphics[width=0.3\textwidth]{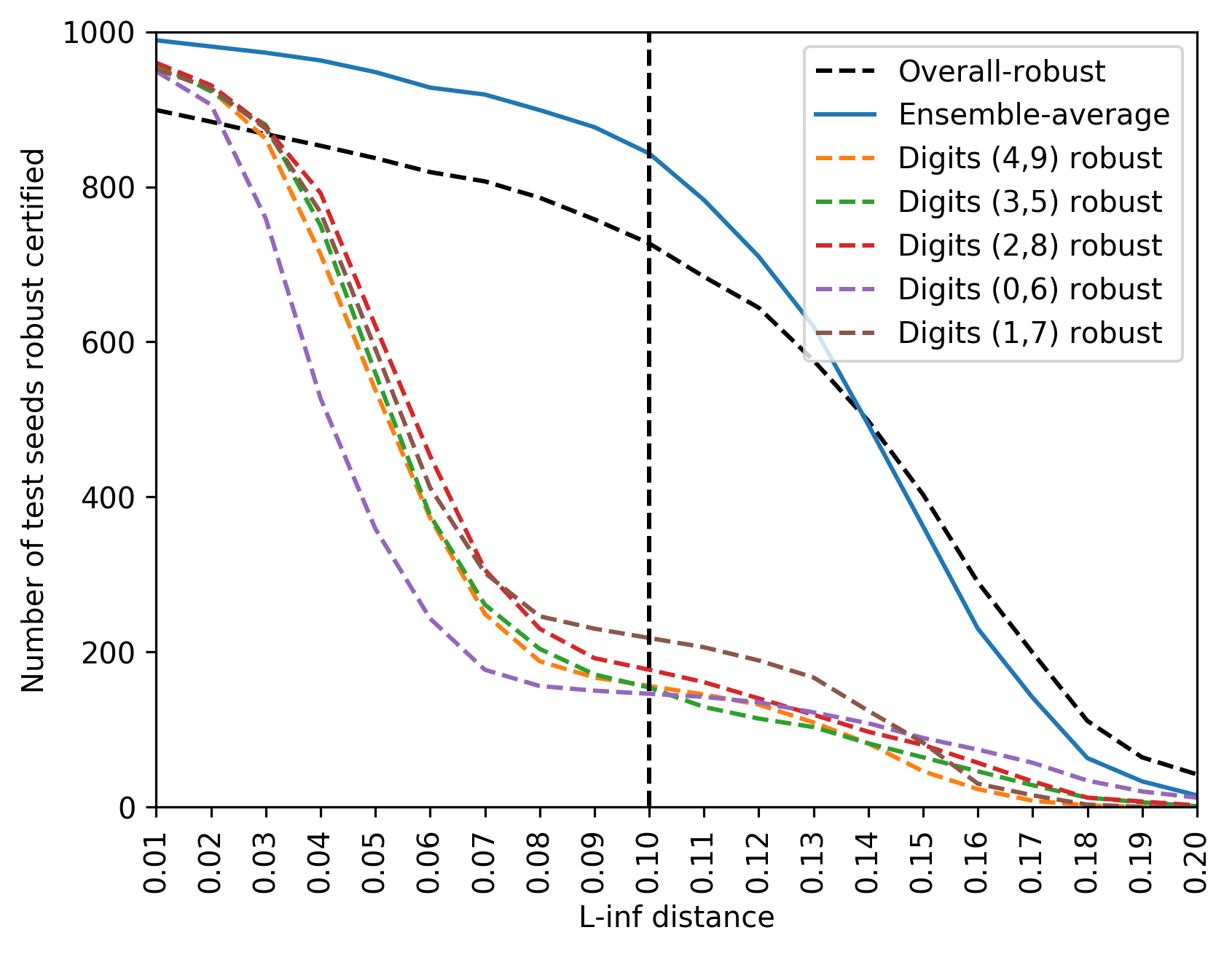}
\label{fig:fig_opt_5_seed}
}
\ 
\subfigure[Clustered Targets]{
\includegraphics[width=0.3\textwidth]{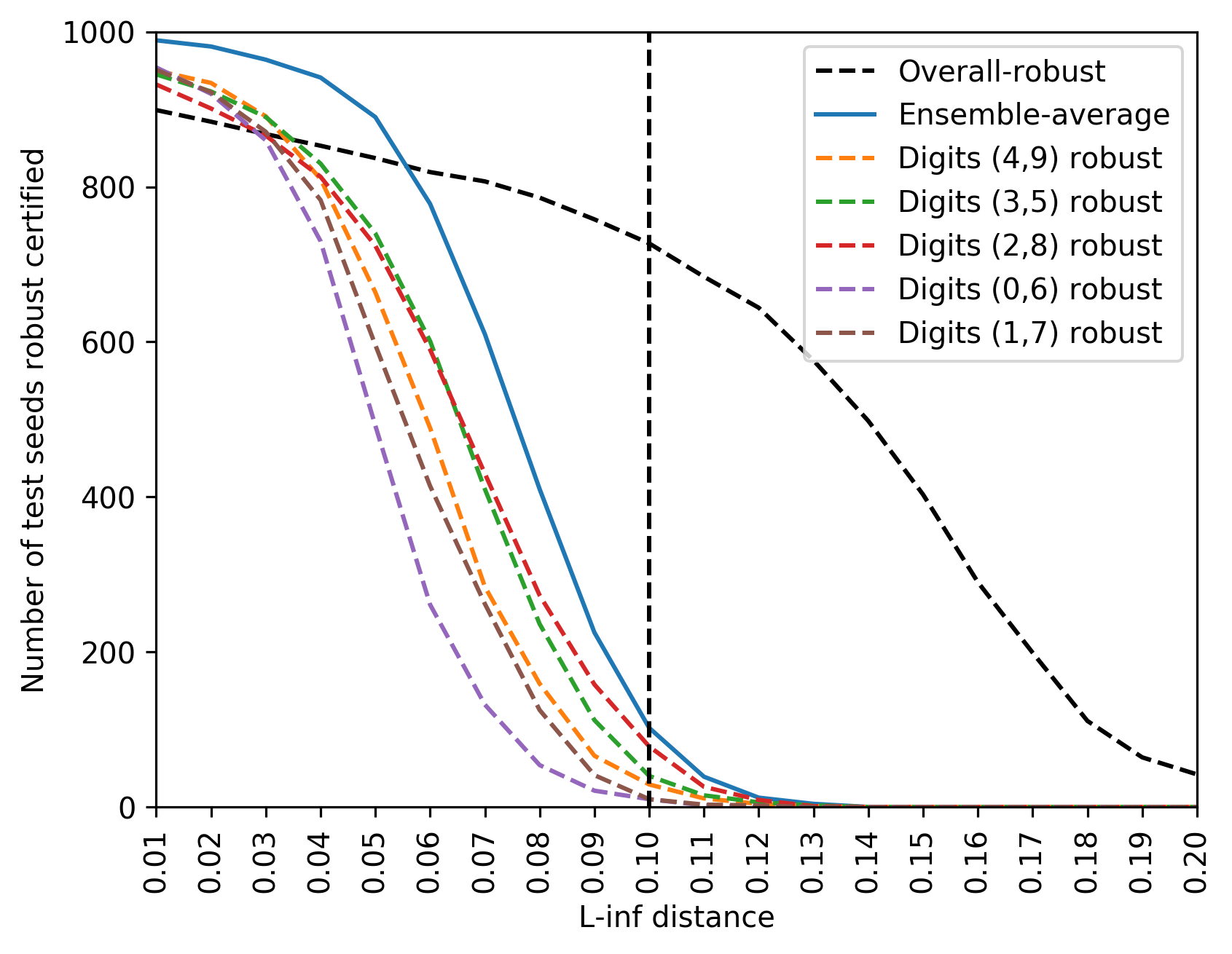}
\label{fig:fig_opt_5_target}
}
\caption{Number of test examples certified to be jointly robust using the
	individual models and the 5-model ensembles average framework with different $\epsilon$ values.}
\end{figure}

\begin{figure}[ht!]
\centering
\includegraphics[width=0.5\textwidth]{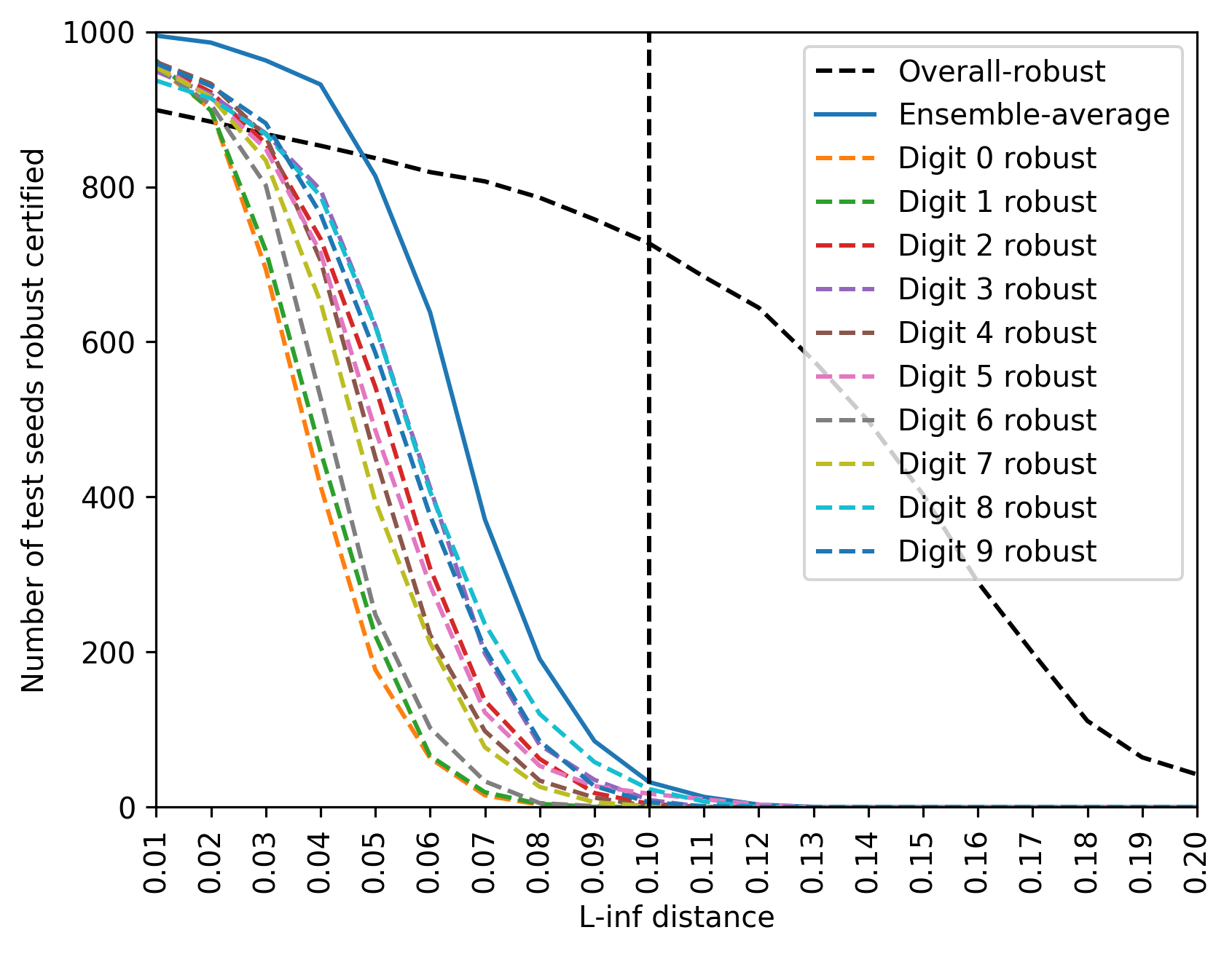}
\caption{Number of test examples certified to be jointly robust using the
	individual models and the 10-model ensemble average framework with
	targets robust for different $\epsilon$ values.}
\label{fig:fig_mod_10_target}
\end{figure}

\end{document}